\documentclass[10pt,twocolumn,letterpaper]{article}

\usepackage{iccv}
\usepackage{times}
\usepackage{epsfig}
\usepackage{graphicx}
\usepackage{amsmath}
\usepackage{amssymb}

\usepackage{mathrsfs}
\usepackage{graphicx}
\usepackage{subfigure}
\newtheorem{theorem}{Theorem}
\newtheorem{proof}{Proof}

\iccvfinalcopy 


\setcounter{page}{1}
\begin{document}

\title{Topology Maintained Structure Encoding}

\author{Qing Fang\\
University of Science and Technology of China\\
Anhui, China\\
{\tt\small fq1208@mail.ustc.edu.cn}
}

\maketitle

\begin{abstract}
Deep learning has been used as a powerful tool for various tasks in computer vision, such as image segmentation, object recognition and data generation.
A key part of end-to-end training is designing the appropriate encoder to extract specific features from the input data.
However, few encoders maintain the topological properties of data, such as connection structures and global contours.
In this paper, we introduce a Voronoi Diagram encoder based on convex set distance (CSVD) and apply it in edge encoding. The boundaries of Voronoi cells is related to detected edges of structures and contours.
The CSVD model improves contour extraction in CNN and structure generation in GAN.
We also show the experimental results and demonstrate that the proposed model has great potentiality in different visual problems where topology information should be involved.

\end{abstract}

\section{Introduction}

\begin{figure}[tbp]
\centering
\subfigure[]{
\begin{minipage}[c]{0.3\linewidth}
  \includegraphics[width=\linewidth]{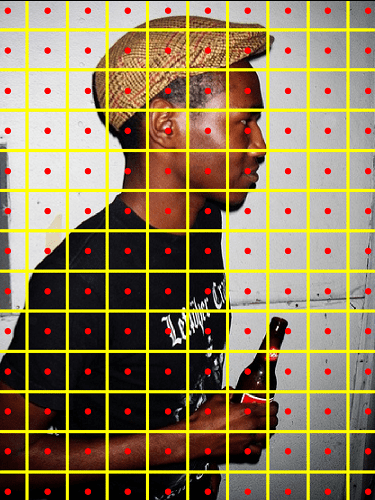}
  \includegraphics[width=\linewidth]{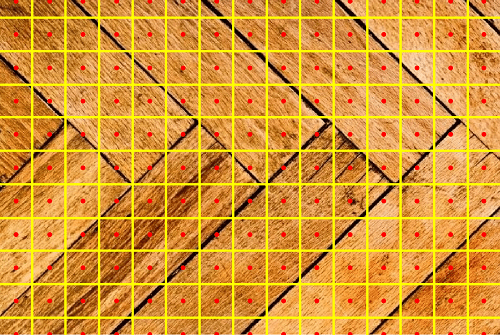}
  \includegraphics[width=\linewidth]{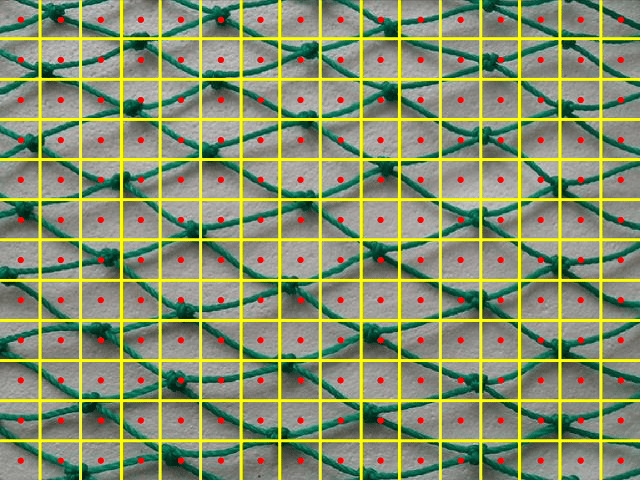}
  \includegraphics[width=\linewidth]{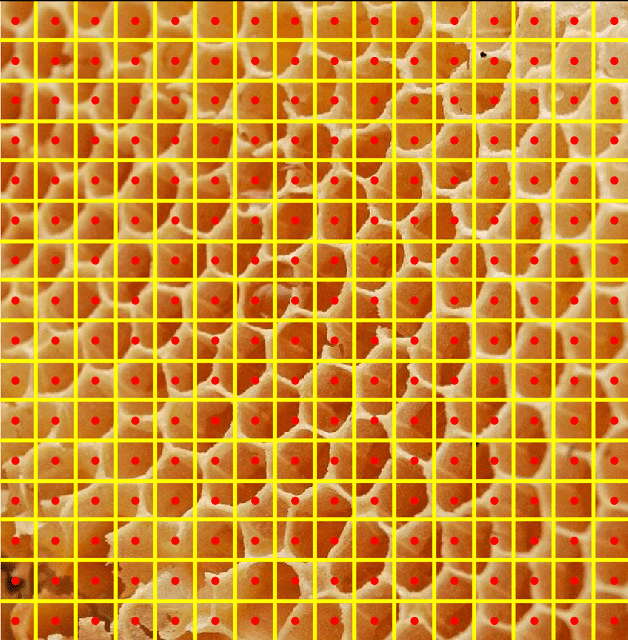}
\end{minipage}
}
\subfigure[]{
\begin{minipage}[c]{0.3\linewidth}
  \includegraphics[width=\linewidth]{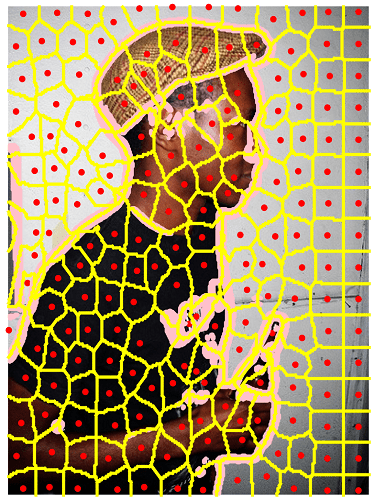}
  \includegraphics[width=\linewidth]{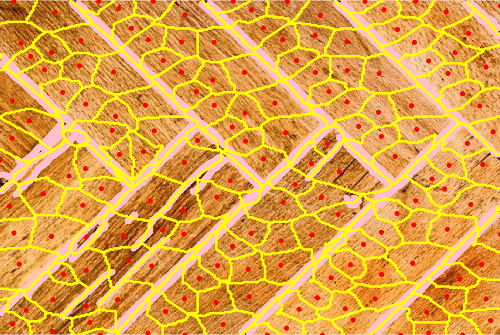}
  \includegraphics[width=\linewidth]{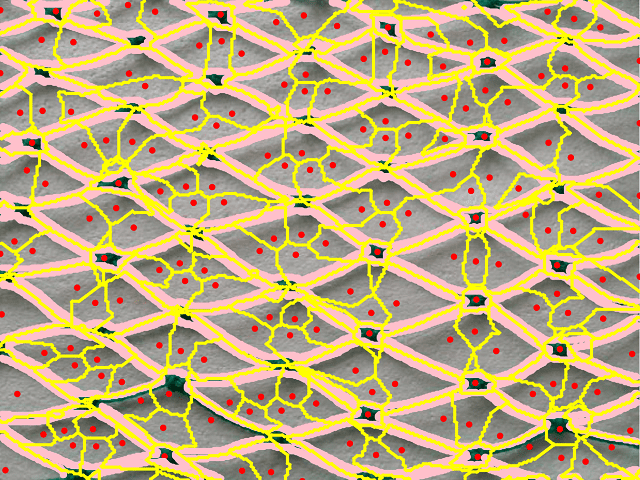}
  \includegraphics[width=\linewidth]{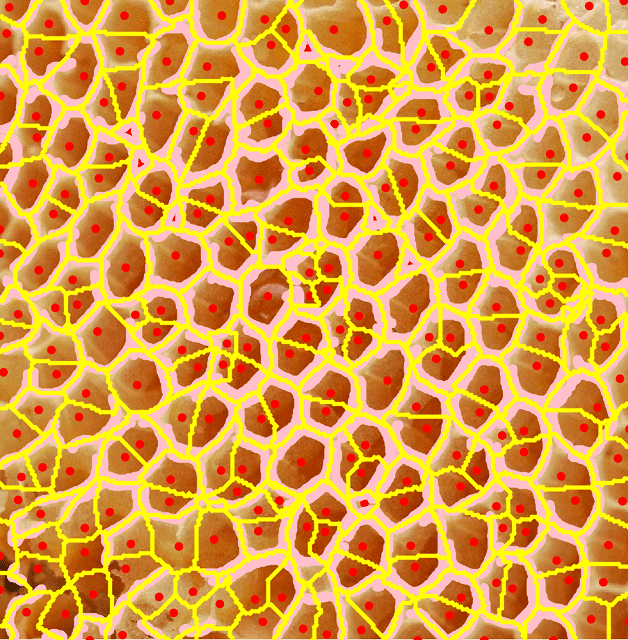}
\end{minipage}
}
\subfigure[]{
\begin{minipage}[c]{0.3\linewidth}
  \includegraphics[width=\linewidth]{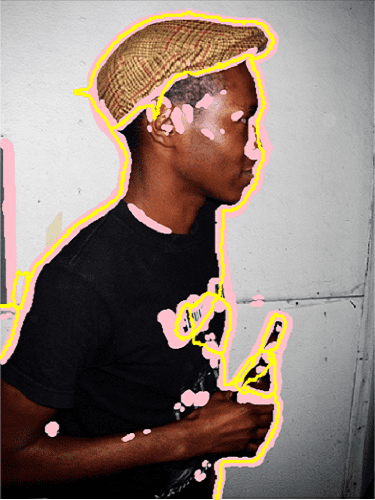}
  \includegraphics[width=\linewidth]{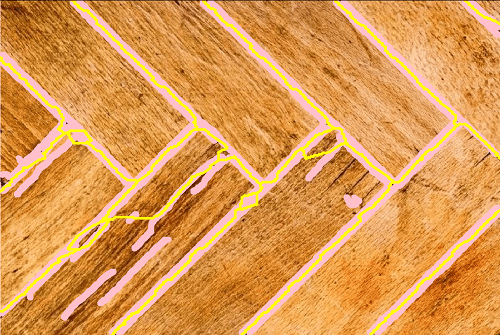}
  \includegraphics[width=\linewidth]{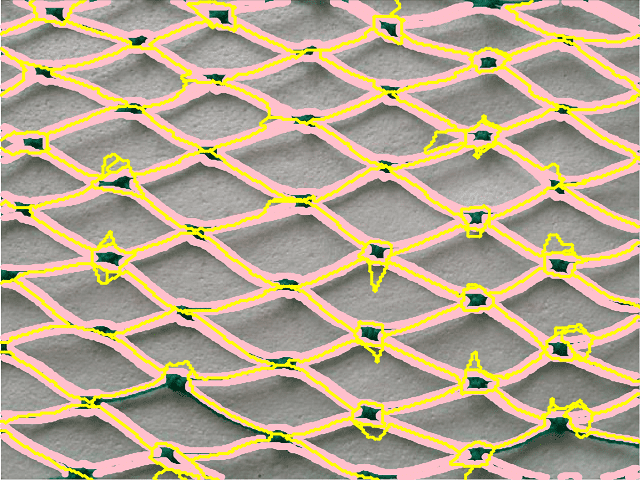}
  \includegraphics[width=\linewidth]{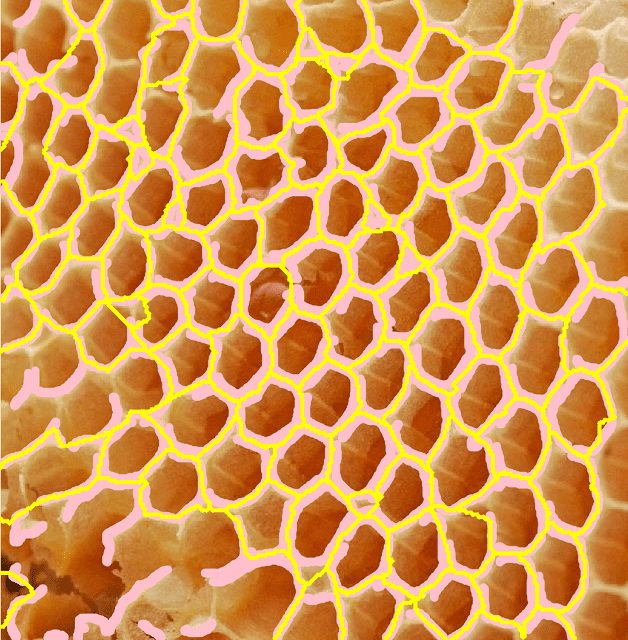}
\end{minipage}
}
\caption{A specific CSVD model for encoding connection edges and structure topology: In the first col (a), the initial state of Voronoi edges is presented by yellow lines; then we detect potential edges of the input image and fit Voronoi edges to boundaries (b); a labeling algorithm is proposed to merge the cells in the same object(hole) and remove redundant edges (c). These examples show that our CSCD encoder performs well in boundary extraction and porous structure fitting.}
\label{fig:structure}
\end{figure}


The development of deep learning has brought about tremendous advances in visual problems. As an effective tool for semi-supervised and unsupervised learning, Generative Adversarial Networks(GANs)~\cite{Goodfellow:2014:GAN:2969033.2969125} recently receives much attention . It is a competitive process between a generator and a discriminator. The generator is used to generate deceptive data while the discriminator is trained to distinguish it with groundtruth. Finally, GANs will be able to synthesize images that can't be recognized~\cite{bau2018gan, DBLP:journals/corr/abs-1710-10196, Wang_2018}. However, most of the state-of-art GAN models are dealing with pictures in pixel level, which may lead to an unnatural appearance caused by disconnected edges. Such topological issues are apparent and significant in some cases, such as handling porous structures both in natural surroundings and artificial object showed in Fig.~\ref{fig:structure}. The connectivity information are expected to be encoded and maintained.


The usage of proper encoders in network model is important for adopting specific information from input data. Convolution kernel is one of the most commonly used encoder, it requires carefully selected filter parameters to achieve satisfying results. The procedure of parameter tuning can be quite laborious and lack of intuition. Some works attempt to design new encoding rules, in which intrinsic characteristics of data are explored to get a better comprehension of features. In neuro-linguistic programming, Cer \etal~\cite{cer2018universal} proposes an semantic encoding model which converts sentences into embedding vectors. In texture recognition, Zhang \etal~\cite{Zhang_2017} integrate CNN layers with dictionary learning. In 3D classification, Riegler \etal~\cite{Riegler_2017_CVPR} apply an octree structure to record distribution of sparse point cloud. When dealing with connectivity structures, it is important to extract their topology features. However, there are few encoders and networks focused on this aspect.


Voronoi diagram (VD) divides a plane into regions based on distance to points in a particular subset. For each point, there is a corresponding region consisting of all points closer to that point than to others. These regions are called Voronoi cells. The common edges between two adjacent cells are called Voronoi edges. VD has been widely applied in different applications, such as Delaunay triangulation for face recognition~\cite{Cheddad_2008} and biological modeling~\cite{Bock2010, Li_2012}.

Inspired by the connectivity of Voronoi edges, we introduce a special convex set distance as Voronoi distance to generate a diagram whose Voronoi edges are forced to coincide with the detected boundaries. Then We devirate an algebraic form of the spcecial convex set distance, which can be embedded in network and calculated by convolutional operation. The network, we call CSVD net, can be seen as a topology information encoder. The encoding process is briefly illustrated in Fig.~\ref{fig:structure}. We indicate that the CSVD encoder is robust to record the different topology among different structures. The representability of CSVD parameters is also tested throngh learning the distribution of parameters by GAN network. The structures generated by GAN  maintain the same statistical characteristics as the input database.


\paragraph{Contributions} The main contributions of our work focus on the following two points:
\begin{enumerate}
\item[1] We introduce a CSVD model to encode topology information of boundaries and convert the encoding process into parameter optimization.
\item[2] A net-based CSVD optimization is proposed to integrate the CSVD model with other networks to encode topological features for different applications.
\end{enumerate}

The rest of the paper is organized as follows. In Section~\ref{sec:related_work}, we briefly introduce the prior research on Voronoi diagram, generative adversarial networks, contours detection and 3D recognition. In Section~\ref{sec:CSVD_Model}, we give a definition of the convex set distance and explain our Voronoi Diagram encoder in detail. In Section~\ref{sec:CSVD_net}, we demonstrate that the CSVD model is able to be embedded into CNN and GAN network and and represent the pattern of structure distribution. The implementation detail is showed in Section~\ref{sec:impl_detail} and the result is evaluated in Section~\ref{sec:result}.


\section{Related Work}
\label{sec:related_work}
\paragraph{Voronoi diagram} Voronoi diagram has been widely applied in computer vision and graphics. The equivalent form of VD is delaunay triangulation which avoids sliver triangles. It is adopted in path planning for automated driving~\cite{Anderson2012ConstraintbasedPA} and face segmentation~\cite{Cheddad_2008}. Kise \etal~\cite{KISE1998370} proposes a direct approximated area Voronoi diagram to analysis and segment page-like images. VD can also be applied into biological structures modeling, such as distribution of cells~\cite{Bock2010} and bone microarchitecture~\cite{Li_2012}.

\paragraph{Generative adversarial networks}
GANs is invented by Ian Goodfellow~\cite{Goodfellow:2014:GAN:2969033.2969125} and has been developed into powerful tools for data generation. Zhu \etal~\cite{Zhu_2016} applies GANs to learn the natural image databases and output reasonable landscape given a few user strokes. A pixel to pixel photo-realistic images synthesis is achieved by conditional GANs~\cite{Wang_2018} in more complicated scenes. Karras \etal~\cite{DBLP:journals/corr/abs-1710-10196} grows both the generator and discriminator progressively in coarse-to-fine manner to generate high quality images with fine details. An analytic framework is proposed to get comprehension of objects and context in real images by~\cite{bau2018gan}, through which an exist unit can be placed in a new surrounding without conflicts. All of these efforts are in pixel level and focused on discrete distributions. So far, all of these efforts are in pixel level and focused on the image discrete distribution. They are unable to maintain the connecting structures of input data. Our net-embedding model can convert the edges distribution to Voronoi diagram parameter, which is continuous and topological invariant. After learning the distribution of VD parameters, the output diagram should have the same topological structure with the input.

\paragraph{Contours detection}
Contours detection has always been a classic topic in image processing. It's widely applied in objects detection, recognition and classification. In edge detection, discontinuities pixels are extracted and a delicate algorithm is designed to trace true edges from these pixels ~\cite{Canny:1986:CAE:11274.11275}. In most visual problems, noise caused be the texture will disturb recognition and a clean outline is necessary. One of the most famous traditional method is snakes~\cite{snakes}, in which an active contour is propagated to track object boundaries. Li \etal~\cite{Li_2016} use an unsupervised learning method to detect the edge in video with the help of optical flow. Data-driven approaches are adopted by~\cite{journals/corr/BertasiusST14, Ganin_2015, conf/cvpr/ShenWWBZ15}. They learns the probability that each pixel belongs to a certain class. Multi-scale convolution layers are included in~\cite{Liu_2018, Xie:2017:HED:3158436.3158453} to take advantage of global image distribution. We will show that the topology information in our CSVD model can help to extract the clean outline. CSVD net has a great potential to be embedded in both supervised and unsupervised learning of contours.

\paragraph{3D object recognition and retrieval}
Traditional convolution layers encounter difficulties when facing 3D data. An additional dimension brings huge costs both in memory and calculation. Different attempts have been made to take advantage of 3D data sparsity. A feature-centric voting scheme is employed to build novel convolutional layers of input point cloud in~\cite{Engelcke_2017}. Multi-view CNNs for object are applied in 3D shape recognition~\cite{Su_2015_ICCV} and improved by introducing multi-resolution filtering in~\cite{Qi_2016}. Octree-based Convolutional Neural Networks appears to encode 3D data adaptively~\cite{Riegler_2017_CVPR, Wang_2017}. The surfaces of objects can be seen as outlines in 2D images. Then the 3D extension of CSVD model can be used to encode shape features. It is more flexible and efficient than voxel-based representation.

\section{CSVD Model}
\label{sec:CSVD_Model}
In this section, we describe our Voronoi diagram model in detail. At first, a convex polygonal distance from~\cite{Ma00bisectorsand} is introduced. We derivate its algebraic form in representation of polygon edge equations. Then a similar distance based on special convex sets is adopted. We indicate that the VD based on the special convex set distance reach the balance of model representability and computational complexity.

\begin{figure}[tbp]
\includegraphics[width=\linewidth]{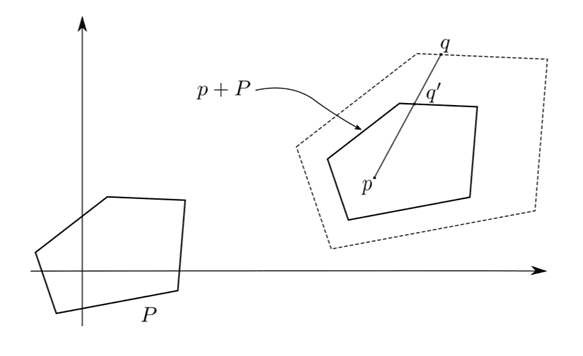}
  \caption{Convex polygonal distance~\cite{Martinez:2018:PVD:3197517.3201343}.}
\label{fig:illustration0}
\end{figure}

\paragraph{Convex polygonal distance}
Given a convex polygon $P$ and a interior point $p$, for any point $q(q \neq p)$ the ray from $p$ to $q$ intersects the polygon $P$ at one point $q'$ [Fig.~\ref{fig:illustration0}], the polygonal distance of $q$ can be defined as:
\begin{equation}
\label{CPDIST}
    d(q|P,p) = \frac{||pq||}{||pq'||}
\end{equation}
At singular point $p$, $d(p|P,p):=0$ to ensure $C^0$ continuity. Eq.~\ref{CPDIST} is hard to calculate. For the convenience of calculation, we derivate an algebraic form of Convex polygonal distance.
\begin{theorem}
\label{THM}
  In convex polygonal distance definition, the straight line passing $p$ and $q$ intersects the edge line $l_i$ of polygon $P$ at point $q_i$ [Fig.~\ref{fig:illustration1} (a)], set equation of $l_i$:
  \begin{equation}
      l_i: \vec{n}_i\cdot (\vec{x}-\vec{x}_p) + b_i=0, ||n_i||=1, b_i>0
  \end{equation}
   then:
  \begin{equation}
    \begin{split}
      d(q|P,p) &= \frac{||pq||}{||pq'||} =\max_i \frac{\vec{pq}}{\vec{pq_i}}\\
               &= \max_i\frac{\vec{n}_i\cdot (\vec{x}_p-\vec{x}_q)}{b_i}
    \end{split}
  \end{equation}
\end{theorem}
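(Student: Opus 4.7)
My plan is to reduce the theorem to a one-parameter computation along the ray through $p$ and $q$. I would parametrize the ray as $\vec{x}(t)=\vec{x}_p+t\vec{u}$ with unit direction $\vec{u}=(\vec{x}_q-\vec{x}_p)/\|pq\|$, so that $q=\vec{x}(\|pq\|)$ and the intersection $q_i$ with the extended edge line $l_i$ is $\vec{x}(t_i)$, where $t_i$ is the unique scalar solving $\vec{n}_i\cdot(t_i\vec{u})+b_i=0$, i.e.\ $t_i=-b_i/(\vec{n}_i\cdot\vec{u})$ whenever $\vec{n}_i\cdot\vec{u}\ne 0$. Because $\vec{pq}$ and $\vec{pq_i}$ are collinear with $\vec{u}$, their ratio is the scalar $\|pq\|/t_i$, and combining this with $\vec{x}_p-\vec{x}_q=-\|pq\|\vec{u}$ reduces $\|pq\|/t_i$ to $\vec{n}_i\cdot(\vec{x}_p-\vec{x}_q)/b_i$ in one line of algebra; this immediately settles the second equality of the theorem.

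To obtain the first equality $\|pq\|/\|pq'\|=\max_i \vec{pq}/\vec{pq_i}$, I would exploit the inward-normal convention forced by the hypothesis $b_i>0$: the polygon is $P=\{\vec{x}:\vec{n}_i\cdot(\vec{x}-\vec{x}_p)+b_i\ge 0 \text{ for all } i\}$, so a ray point $\vec{x}(t)$ lies in $P$ exactly when $t\le t_i$ for every $i$ with $\vec{n}_i\cdot\vec{u}<0$. Hence the exit parameter is $t_{i^\ast}=\min\{t_i:\vec{n}_i\cdot\vec{u}<0\}$ and $q'=\vec{x}(t_{i^\ast})$. Converting this minimum of positive $t_i$ into a maximum of the signed ratios $\|pq\|/t_i$ taken across all edges is legitimate because the excluded edges either contribute a negative value (when $\vec{n}_j\cdot\vec{u}>0$, so $t_j<0$) or are parallel to the ray and carry no finite intersection. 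Chaining the two steps then delivers the full chain of equalities.

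The main obstacle I anticipate is not algebraic but careful sign bookkeeping: I must verify that $b_i>0$ really pins down $\vec{n}_i$ as the inward unit normal to $l_i$, and then rule out the possibility that some non-exiting edge accidentally realises the maximum. Once those two points are nailed down, convexity of $P$ forces the maximising index to be the exit edge and the identification is complete. I would close with a one-line remark on the singular case $q=p$: here $\vec{x}_p-\vec{x}_q=\mathbf{0}$ makes every entry of the max vanish, which matches the definition $d(p|P,p):=0$ and preserves the $C^0$ continuity already noted after Eq.~\ref{CPDIST}.
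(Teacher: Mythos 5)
Your proposal is correct and follows essentially the same route as the paper: both arguments use convexity to identify the exit edge as the one realising the minimal positive intersection parameter (equivalently the maximal signed ratio $\vec{pq}/\vec{pq_i}$), and both then substitute the line equation $\vec{n}_i\cdot(\vec{x}_{q_i}-\vec{x}_p)=-b_i$ to obtain the linear form. Your version is somewhat more careful than the paper's — you explicitly check that non-exiting edges contribute negative or undefined ratios and so cannot spoil the maximum, and you handle the singular point $q=p$ — but these are refinements of the same argument rather than a different proof.
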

\begin{proof}
Because $P$ is convex and $q'$ is the intersection of ray and $P$, line segment $pq'$ is in $P$ and there is no $q_i$ between $p$ and $q'$. Then $\frac{\vec{pq'}}{\vec{pq}}$ is the minimal positive value among all $\frac{\vec{pq_i}}{\vec{pq}}$, which means $\frac{||pq||}{||pq'||} =\max_i=\frac{\vec{pq}}{\vec{pq_i}}$.
Then we have
\begin{equation}
  \frac{\vec{pq}}{\vec{pq_i}} = \frac{\vec{pq}\cdot \vec{n}_i}{\vec{pq_i}\cdot \vec{n}_i}=\frac{(\vec{x}_q-\vec{x}_p)\cdot \vec{n}_i}{(\vec{x}_{q_i}-\vec{x}_p)\cdot \vec{n}_i}
\end{equation}
Because $q_i$ is on $l_i$, \ie $\vec{n}_i\cdot (\vec{x}_{q_i}-\vec{x}_p) + b_i=0$, the convex polygonal distance has the form of a maximum of linear functions:
\begin{equation}
    d(q|P,p) = \max_i\frac{\vec{n}_i\cdot (\vec{x}_p-\vec{x}_q)}{b_i}
\end{equation}
\end{proof}

Assume there are a series of convex polygons $P_k$ and interior points $p_k$, the corresponding VD divides the plane into seperated Voronoi cells. For any plane point q, q belongs to the cell of convex polygon $P_j, j = {\arg\min}_k d(q|P_k,p_k)$. Boundaries of cells represent there are two different convex polygonal distances kept equal. Thus the Voronoi edges are polylines.

\begin{figure}[tbp]
\centering
\subfigure[]{
\begin{minipage}[c]{0.45\linewidth}
\includegraphics[width=\linewidth]{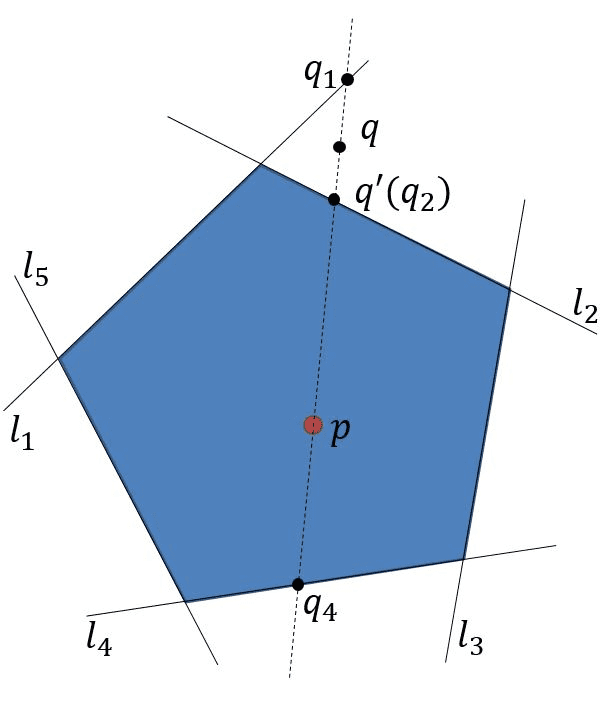}
\end{minipage}
}
\subfigure[]{
\begin{minipage}[c]{0.45\linewidth}
\includegraphics[width=\linewidth]{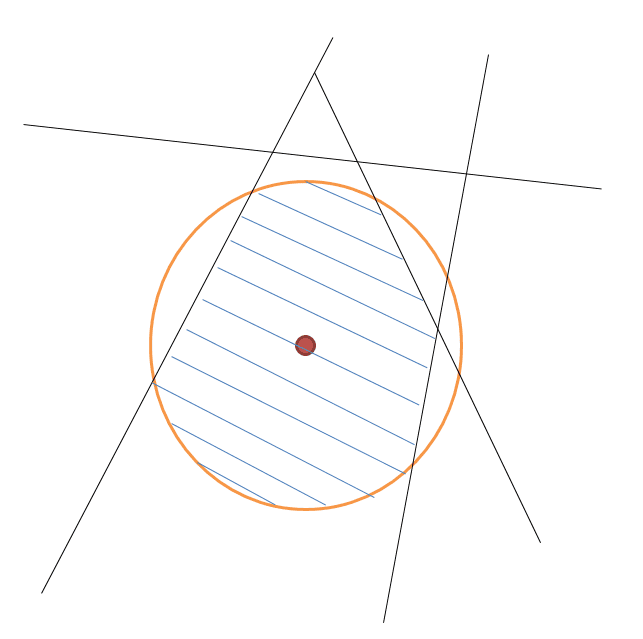}
\end{minipage}
}
  \caption{(a) A convex polygon can be seen as the intersection of the half-planes divided by its edges; (b) a convex set generated by the intersection between several semi-hyperplanes and a circle. }
\label{fig:illustration1}
\end{figure}

\paragraph{Convex set distance} When encoding curve boundaries by convex polygonal distance based VD, we must approximate it with polylines. Icking \etal~\cite{ICKING2001139} generalizes the polygon shape to arbitrary convex set. It is obvious that any ray from a point inner convex set only intersects the set boundary at one point $q'$. Then a similar distance can be defined like Eq.~\ref{CPDIST}. However, the boundary of general convex set is flexible. It brings difficulty in VD calculation and optimization. Fortunately, quadratic curves are sufficient to fit curved boundaries in almost case. Thus only a small change in the shape of convex polygons need to be made. We add a bounded circle centered at $p$ [Fig.~\ref{fig:illustration1} (b)] and it can be proved the boundary of Voronoi cell corresponding to the arc of circle is quadratic. An algebraic form of convex set distance can be derivated similarly:
\begin{equation}
\label{RelaxedMetric}
    d(q|P,p,r) = \max \{\max_i\frac{\vec{n}_i\cdot (\vec{x}_i-\vec{x}_q)}{b_i}, \frac{||\vec{x}_p-\vec{x}_q||}{r}\}
\end{equation}
Here $r$ is the radius of bounded circle.

We mention that adding a bounded circle also relax the restrictions on the edges of convex polygon. In Fig.~\ref{fig:illustration1} (a), convex polygons are represented by the intersection of half-planes. However, not arbitrary half-planes intersection can be bounded, see Fig.~\ref{fig:illustration1} (b). It will lead to ambiguity in convex polygonal distance definition [Eq.~\ref{CPDIST}]. Restriction of keeping the intersection of half-planes bounded is complicated and will bring difficulty in optimization. The modified convex set distance is free-of-restrictions and has more powerful representability.

\section{CSVD Net}
\label{sec:CSVD_net}
In this section, we will explain our topology maintained encoder. We firstly formulate the encoding process as an edge fitting problem by our CSVD model. Then, we show how to merge the CSVD parameters into the convolutional networks and generate the corresponding Voronoi diagram through CNN. At last, we design a target energy and two regularization terms of the Voronoi edges to fit contours and edge-like structures. The CSVD model is flexible. It can be embedded into other network and the optimization can be done through back propagation algorithm.

Edge-like structures and object contours encoding can be divided into a pixel detection phase and an edge fitting phase. The first step is to extract potential pixels which locate at the structure edges or contours. This can be done by Canny operation or other supervised CNN. The second step is fitting the extracted pixel sets with Voronoi edges of CSVD model. We focus on the fitting optimization and explain how to embed CSVD model into CNN net.

\paragraph{Problem}  Given pixel sets $\Omega$ representing potential edge-like structures or object contours in an image, we need to find the parameters of convex sets $\{\vec{x}_{p_k}, \vec{n}_{k,e_k}, b_{k,e_k}, r_k\}$ to make:
\begin{equation}
  d(q|P_j,p_j,r_j) = \min_{k\neq j} d(q|P_k,p_k,r_k), \forall q\in\Omega
\label{EDGEPROP}
\end{equation}
Here $j = arg \min_{k} d(q|P_k,p_k,r_k)$, which means $\Omega$ is contained in the pixel set of Voronoi edge.

\paragraph{CSVD net} The Eq.~\ref{RelaxedMetric} indicates that convex set metric is the maximum of series of linear functions, which is similar to maxpooling in CNN network. Meanwhile, Eq.~\ref{EDGEPROP} is a minpooling operator to get the minimum and second minimum of convex set distances. So the CSVD parameters can be embedded in neural networks and a propagation algorithm can be applied to generate CSVD [Fig.~\ref{fig:Net}].

\begin{figure}[tbp]
\includegraphics[width=\linewidth]{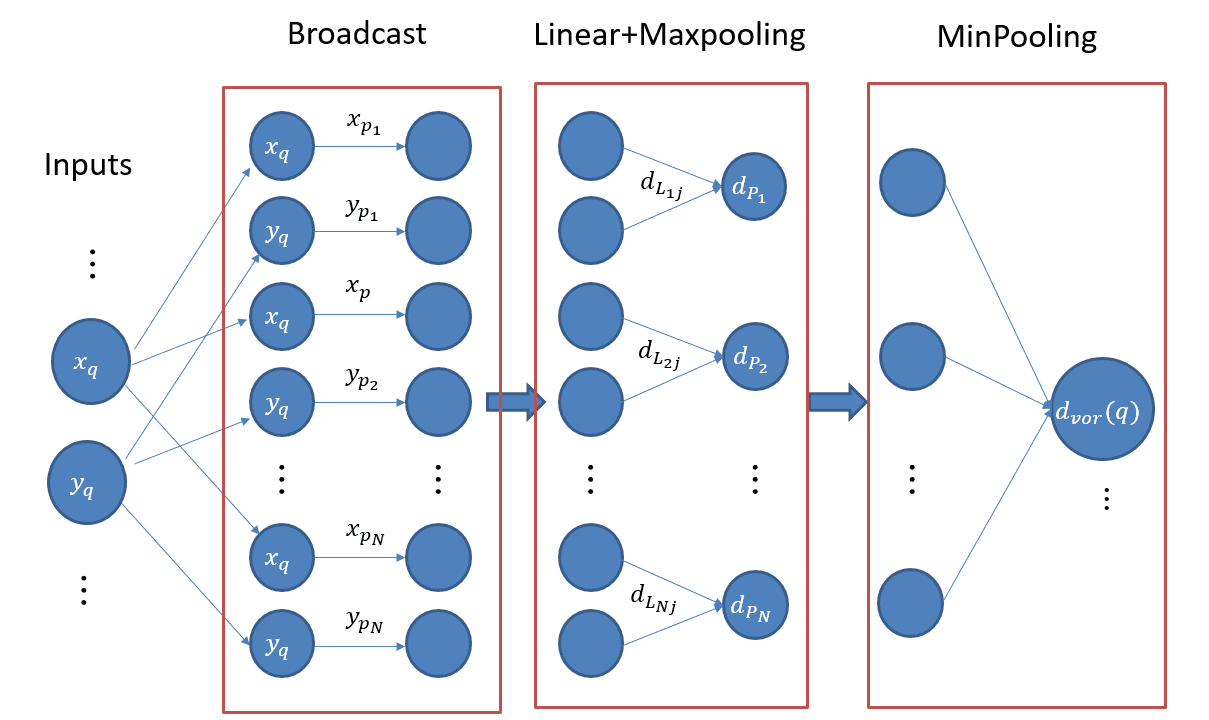}
  \caption{CSVD three layers network: input point $(x_q,y_q)$ is broadcasted to convex sets. For each set, convex set distance is calculated through the maxpooling opration. Then the min distance and voronoi cell $q$ belongs to is obtained by minpooling layer.}
\label{fig:Net}
\end{figure}

\paragraph{Training} When using CSVD to encode topology information, the target energy is to make the extracted boundary edge or connecting structures concide with the edge of voronoi diagram.
\begin{equation}
    E_{tar} = \underset{x\in \Omega}{\mathbf{Means}} (d(x|P_{k_x},p_{k_x}, r_{k,x})- \min_{k \neq k_x} d(x|P_k,p_k,r_k))^2
\label{E1}
\end{equation}
$k_x = arg \min_k d(x|P_k,p_k, r_k) $.
Usually, CSVD parameter is under-constrainted and minimize Eq.~\ref{E1} make $d(x|P_{k_x},p_{k_x}, r_{k,x})$ comes to $0$. We add anchor point to target energy to avoid degeneration.
\begin{equation}
\begin{split}
    E_{tar} =& \underset{x\in \Omega}{\mathbf{Means}}(d(x|P_{k_x},p_{k_x}, r_{k,x})-1)^2\\&+(\min_{k \neq k_x} d(x|P_k,p_k,r_k)-1)^2
\end{split}
\end{equation}
Generally, two regular energy terms are added to guarantee numerical accuracy and Voronoi cell quality.
\begin{equation}
    E_{reg}^1 = ReLU(\epsilon-\min_{i,j}(b_{ij},r_i))
\label{reg1}
\end{equation}
\begin{equation}
    E_{reg}^2 = ReLU(\delta-\min_i\min_{j\neq i}d(p_i|P_j,p_j,r_j))
\label{reg2}
\end{equation}
Here, regularization term Eq.~\ref{reg1} punishes small cells whose scale are under $\epsilon$ and regularization term Eq.~\ref{reg2} avoids adjacent interior points closer than $\delta$. Then CSVD net can be trained by back propagation algorithm to minimize $E_{total}$:
\begin{equation}
    E_{total} = E_{tar}+\lambda_{1}E_{reg}^1+\lambda_{2}E_{reg}^2
\label{energy}
\end{equation}

\section{Implementation details}
\label{sec:impl_detail}

In this section, the algorithm details in contours and porous structures encoding are elaborated. Then we design a GAN architecture to learn the distribution of mesh-like structures.

\begin{figure}[tbp]
\includegraphics[width=\linewidth]{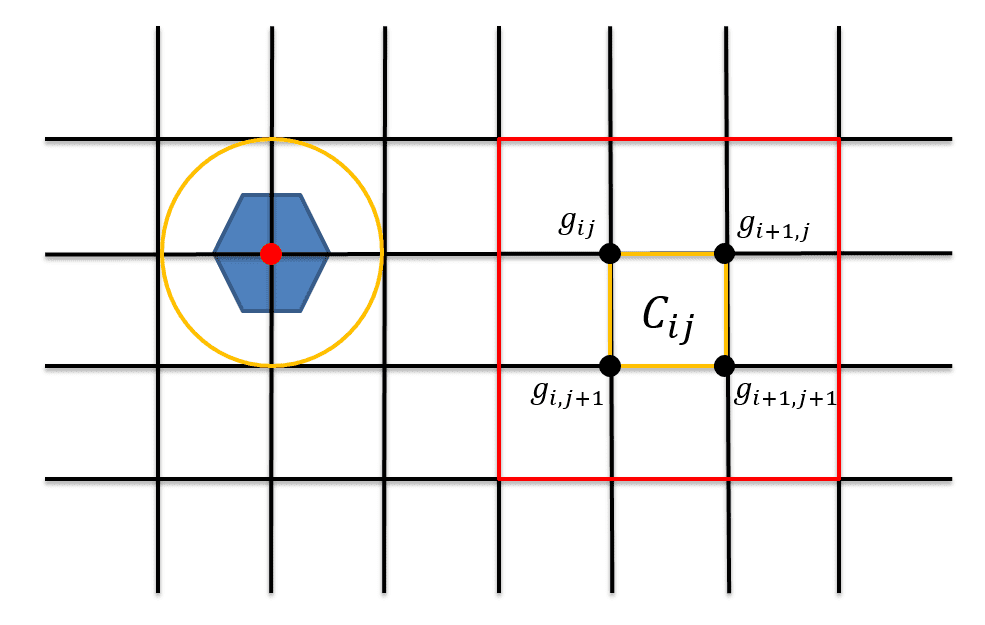}
  \caption{ The left part of the image shows an initial convex set status. The red rectangle on the right represents grid points in the neiborhood $\mathscr{N}_2$ of cell $C_{ij}$.}
\label{fig:illustration2}
\end{figure}

\paragraph{Contours and structures encoding}
The strategy we adopted in fitting contours and structures are scattering and labeling. Because we are not aware of the number of objects or holes in porous structures, the number of convex sets is set to be more than required. These convex sets are placed as seeds in whole image plane and the locations and shapes are optimized to fit the detected edge with Voronoi edges. The a labeling algorithm is designed to merge the Voronoi cells in same objects(holes). We quantify the coincidence of Voronoi edges with detected edges and accumulate values for all pairs of adjacent Voronoi cells. Then we compare the accumulated value with given threshold value and seprate the cells beyond threshold. A flood-fill like method is adopted in merging adjacent cells to generate clean contours. In the initialization phase, We  set CSVD parameters in a delicate manner to take advantage of locality in calculating the minimal convex set distance. We sample the image to generate a square grid in $M\times N$ resolution and attach each grid point $g_{ij}$ with a convex set $S_{ij}$. For each $S_{ij}$, we initialize the location of interior point $p_{ij}$ at $g_{ij}$ and set $P_{ij}$ a regular polygon with $N_e$ number of edges. The radius of bounding circle is $l_u$ and the regular polygon is inscribed in a concentric with radius $\frac{l_u}{2}$ [Fig.~\ref{fig:illustration2}]. To accelerate the generation of CSVD, the minpooling operator can be restricted in a local batch:
\begin{equation}
  d_{minpooling}(q) = \min_{(k,l)\in \mathscr{N}_2(i,j)} d(q|P_{kl},p_{kl},r_{kl})
\end{equation}
Here, $(i,j)$ is the grid cell index where query point $q$ locates in and $\mathscr{N}_2(i,j)$ is the neighborhood region circled by red rectangle in Fig.~\ref{fig:illustration2}. Then Eq.~\ref{EDGEPROP} can be represented:
\begin{equation}
\begin{split}
  d(q|P_{k',l'},p_{k',l'},r_{k',l'}) = \min_{\substack{(k,l)\in \mathscr{N}_2(i,j),\\(k,l)\neq(k',l')}} d(P_{kl},p_{kl},r_{kl})
\end{split}
\label{MINILOCAL}
\end{equation}
Here, $(k',l')=arg \min_{(k,l)\in \mathscr{N}_2(i,j)} d(q|P_{kl},p_{kl},r_{kl})$.

\begin{figure}[tbp]
\includegraphics[width=\linewidth]{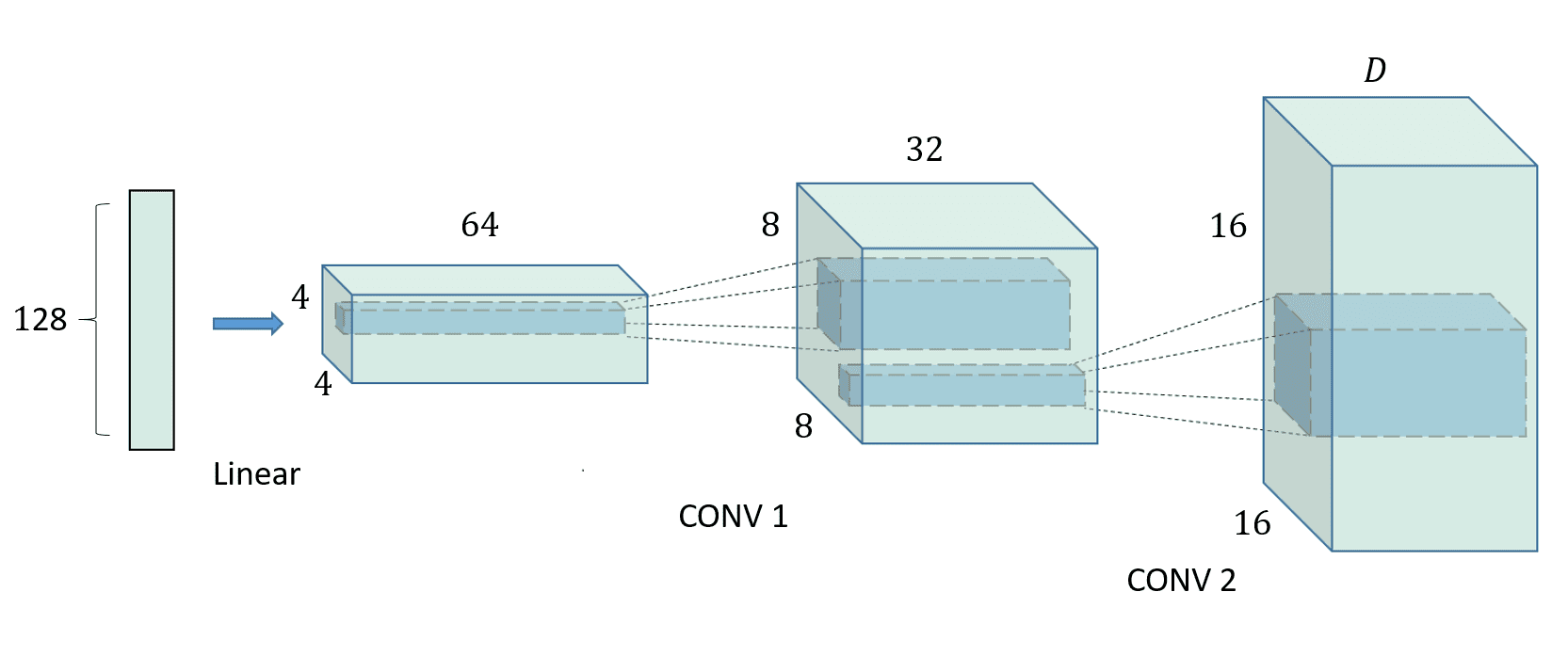}
  \caption{ A 128 dimensional uniform distribution Z is fed into linear layer to output a convolutional representation with many feature maps. Two  convolutions convert high level features into CSVD parameters space.}
\label{fig:generator}
\end{figure}

\paragraph{Mesh-like structures learning}
Through encoding phase, the structure information is recorded by the locations of interior points and shapes of convex sets. Then the feature of structures can be extracted by learning corresponding distribution of CSVD parameters. In this way, structure generation can be converted to parameters generation. Thus we design a GAN to learn the distribution of mesh-like structures parameters. The parameter from encoding phase is a $M\times N\times D$ tensor. Here, $M\times N$ is the grid shape($16\times 16$ default) and $D$ is the dimension of each convex set parameter and labels. We set the dim of latent space $dim(z)=128$. Thus the generator is a three layer net consisting of a linear layer and two convolutional layers [Fig.~\ref{fig:generator}]. The discriminator is set to the reverse of the generator and change the output dim from $128$ to $1$. Then We add the gradient penalty~\cite{gulrajani2017improved} to in training phase.

\section{Results}
\label{sec:result}

We test our CSVD encoding ability in PASCAL VOC2012 images database~\cite{pascal-voc-2012} and Describable Textures Dataset~\cite{cimpoi14describing}. We design a edge detection CNN network for contours extraction. The convolutional kernel of the CNN net is initialized with traditional gradient filters, such as sobel and LoG filters. Then we cascade the detection network with our CSVD net and train the parameters of detection and CSVD net together to minimize the energy Eq.~\ref{energy}. In our experiments, the input image is scaled into a unit square by divide its shape and the weights of regularization terms are set as $\lambda_1=\lambda_2=1$. The results is showed in Fig.~\ref{fig:result1}. From the picture, we can see that training detection and CSVD net together will help to remove redundant edges and generate a clean contour. In cells-like structures encoding, the Voronoi edges between cells in different labels align with structure edges tightly.

To test the representability of CSVD model in cells-like structures, we first create $2000$ images with cells-like structures in different topology. This can be done by sampling points in whole plane non-uniformly and generating Voronoi diagram of these seeds. In our experiments, the scale of voronoi cells decreases from the top left corner to the bottom right corner. We show the GAN outcome in Fig.~\ref{fig:result2}. It is apparent that the output parameters satisfy the distribution of database. However, we can't promise the boundaries between cells of different labels to be straight lines. This is because there is just a simple classification in the discriminator. We only focus on the statistical behavior of parameters. The local property is ignored.

\begin{figure}[tbp]
\centering
\subfigure[]{
\begin{minipage}[c]{0.3\linewidth}
 \includegraphics[width=\linewidth]{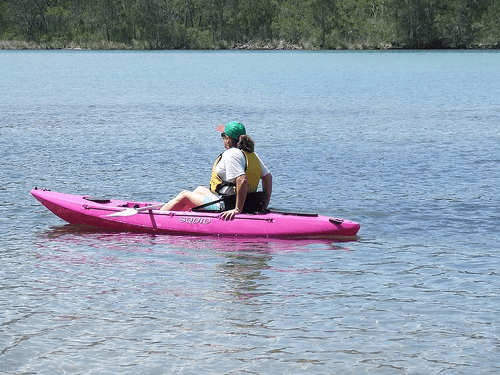}
 \includegraphics[width=\linewidth]{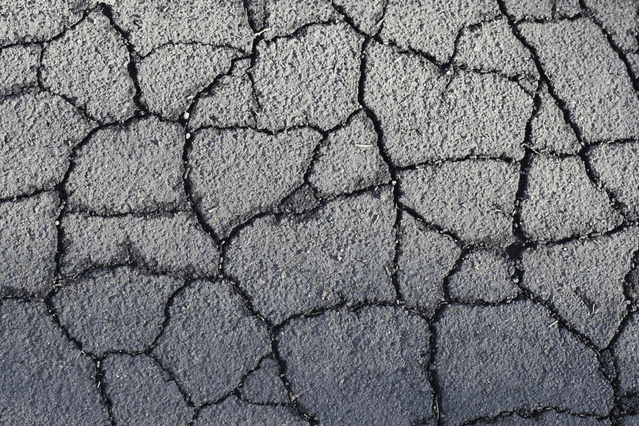}
 \includegraphics[width=\linewidth]{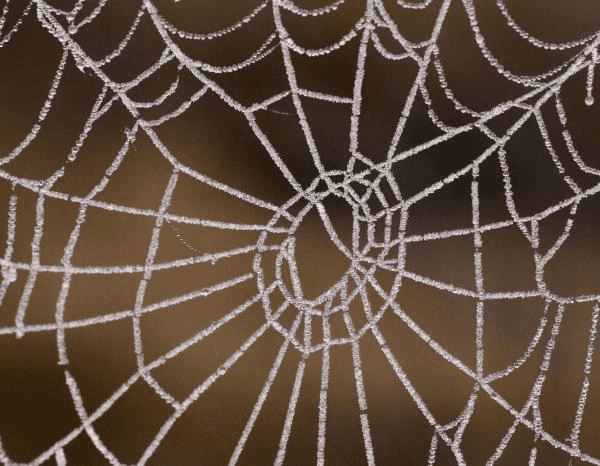}
 \includegraphics[width=\linewidth]{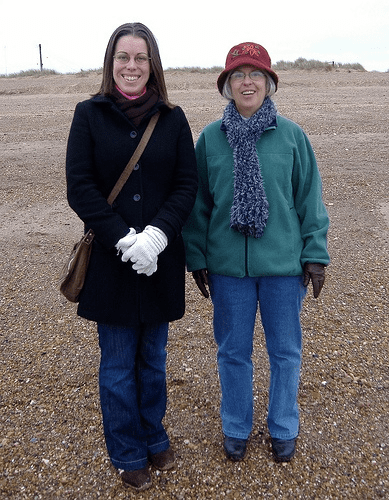}
\end{minipage}
}
\subfigure[]{
\begin{minipage}[c]{0.3\linewidth}
  \includegraphics[width=\linewidth]{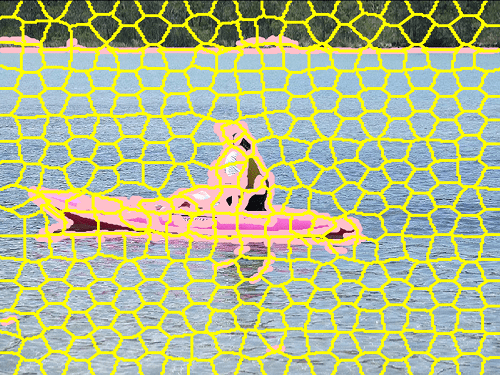}
  \includegraphics[width=\linewidth]{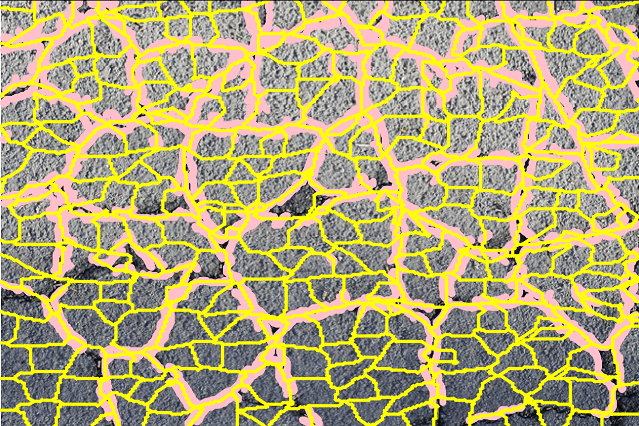}
  \includegraphics[width=\linewidth]{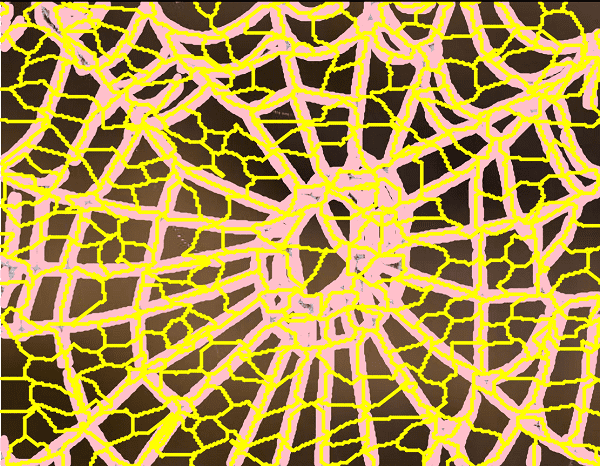}
  \includegraphics[width=\linewidth]{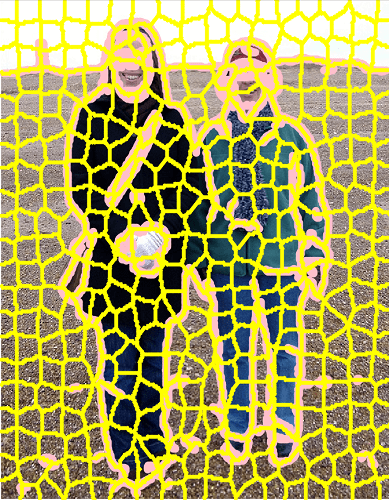}
\end{minipage}
}
\subfigure[]{
\begin{minipage}[c]{0.3\linewidth}
  \includegraphics[width=\linewidth]{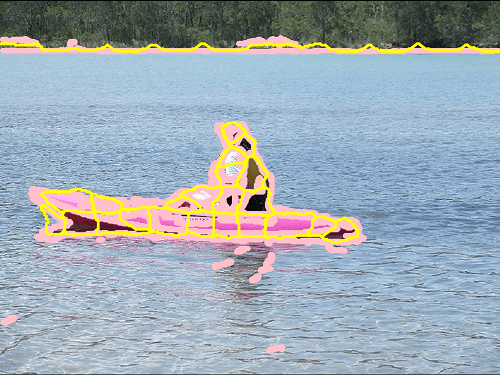}
  \includegraphics[width=\linewidth]{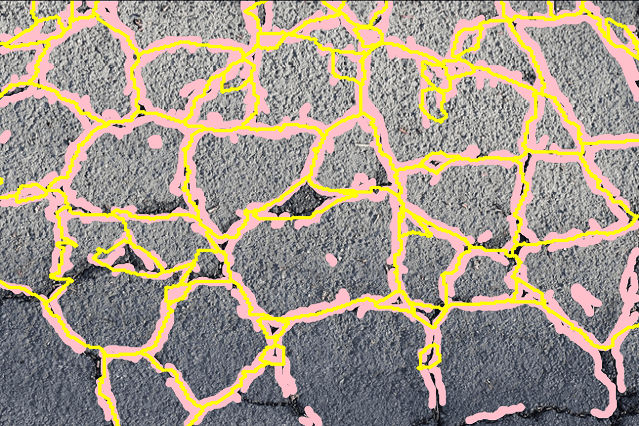}
  \includegraphics[width=\linewidth]{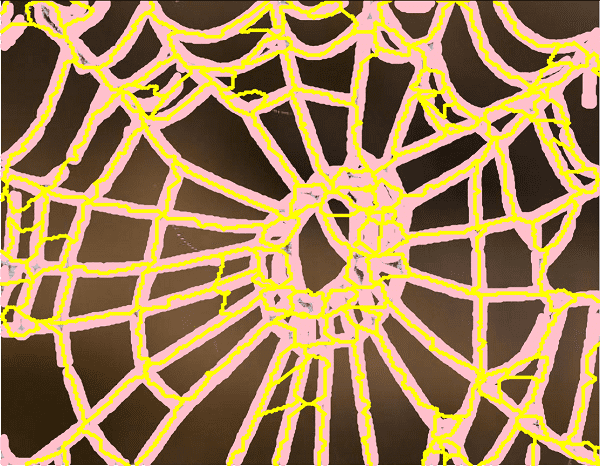}
  \includegraphics[width=\linewidth]{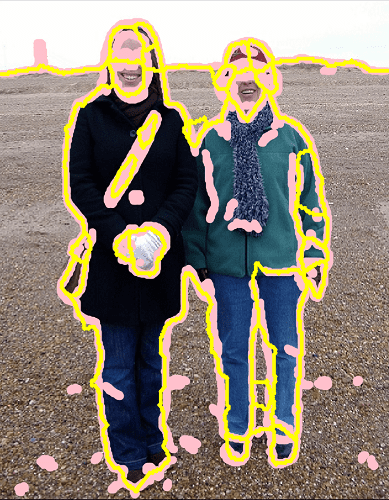}
\end{minipage}
}
  \caption{More results are tested on different scenes and porous structures. }
\label{fig:result1}
\end{figure}

\begin{figure}[tbp]
\centering
\subfigure[]{
\begin{minipage}[c]{0.3\linewidth}
\includegraphics[width=\linewidth]{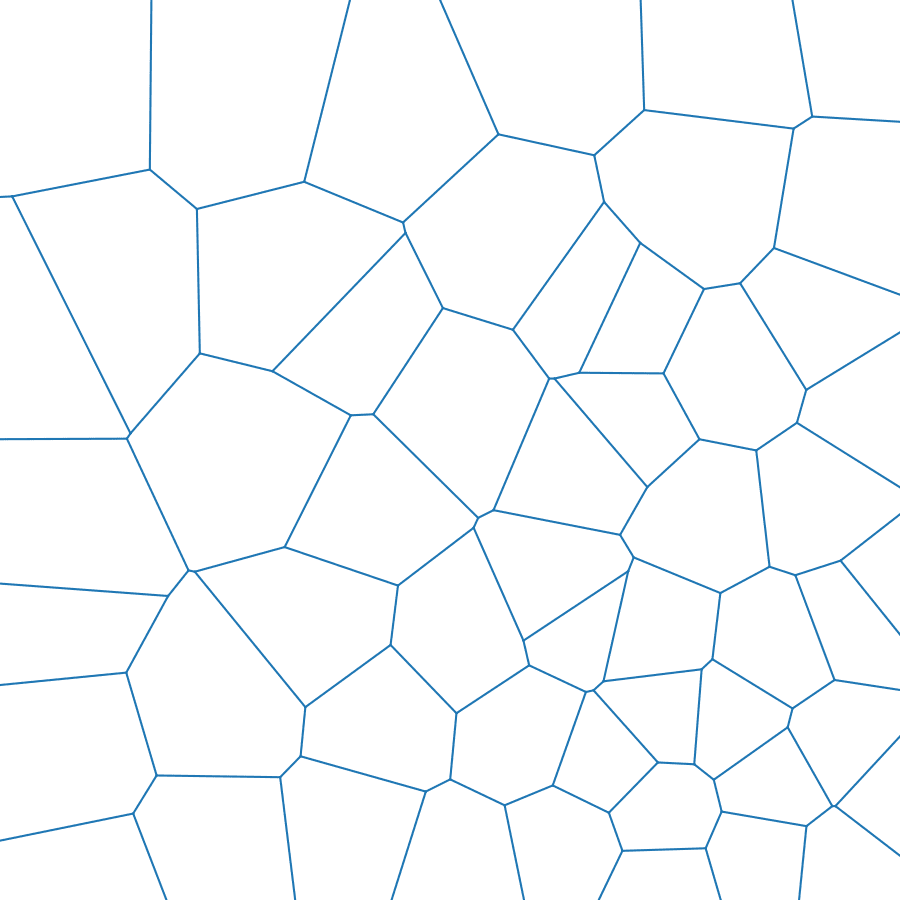}
\includegraphics[width=\linewidth]{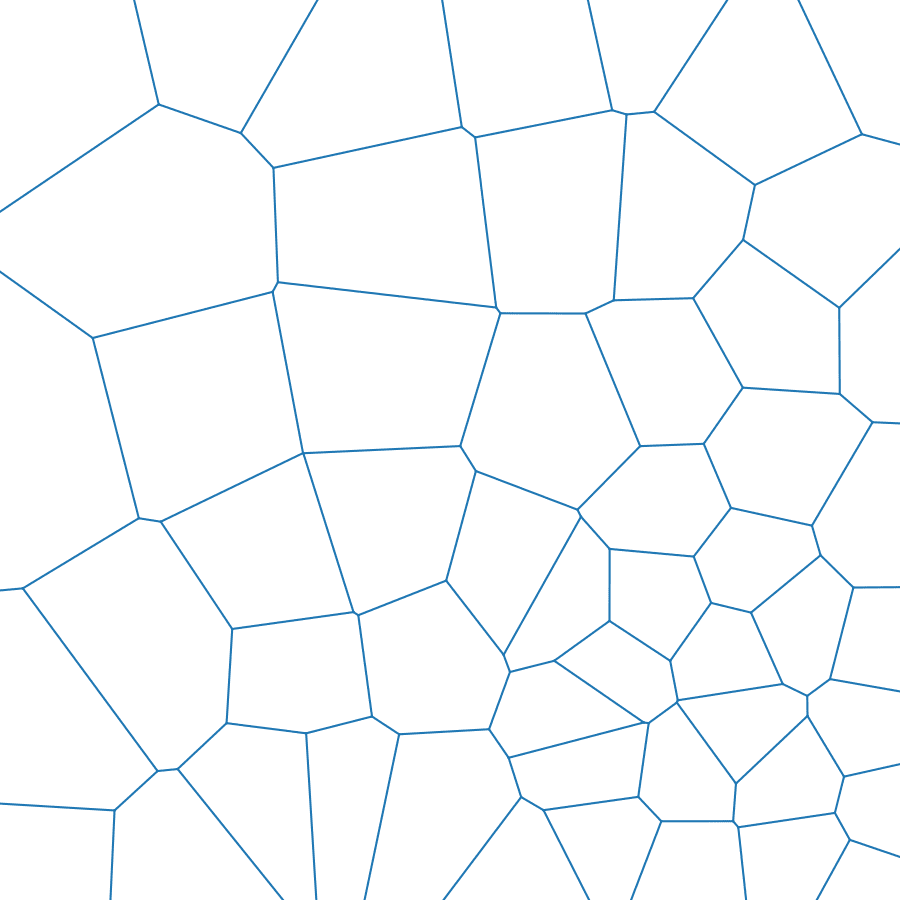}
\includegraphics[width=\linewidth]{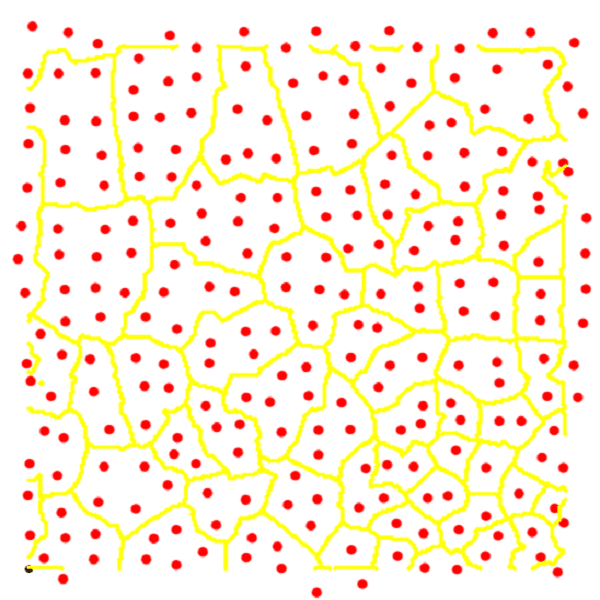}
\end{minipage}
}
\subfigure[]{
\begin{minipage}[c]{0.3\linewidth}
  \includegraphics[width=\linewidth]{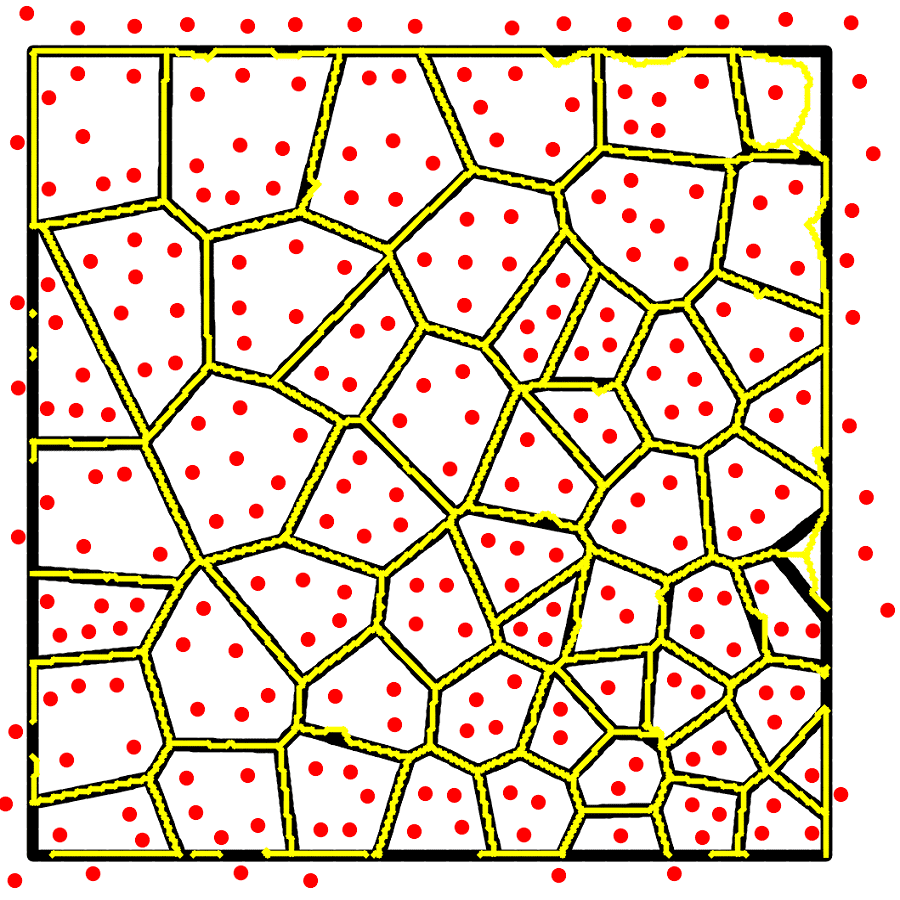}
  \includegraphics[width=\linewidth]{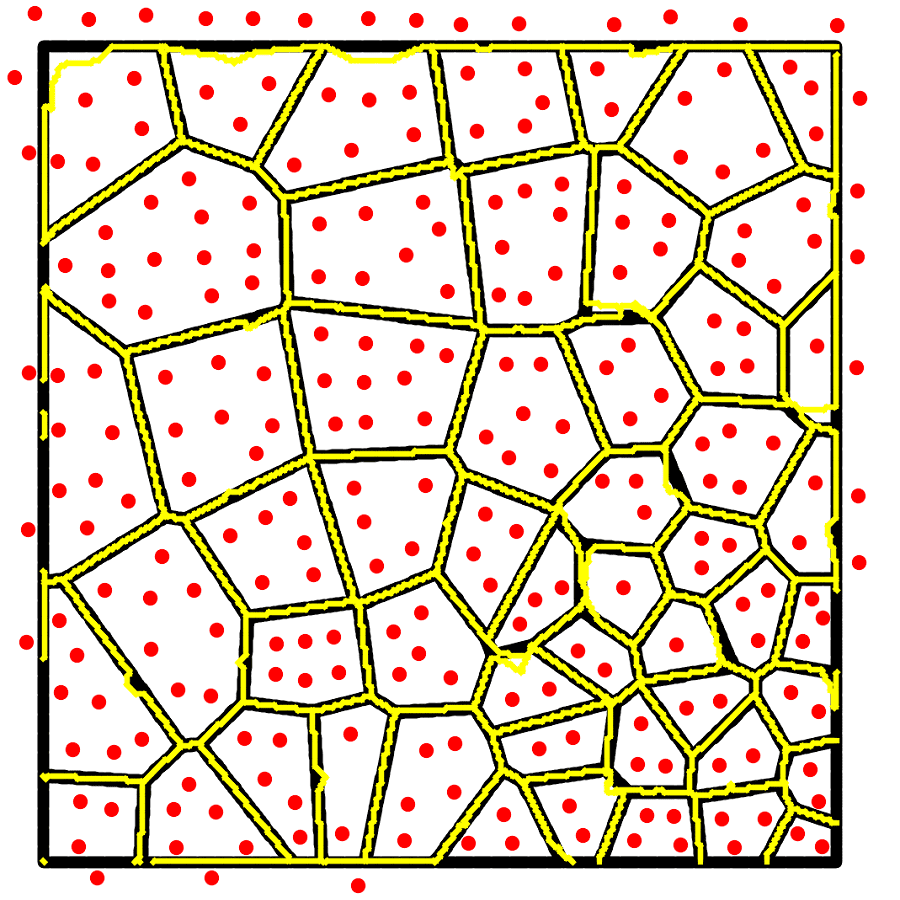}
  \includegraphics[width=\linewidth]{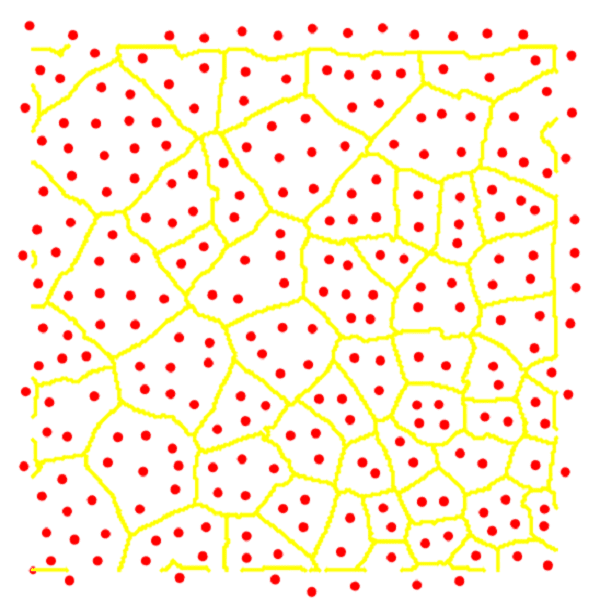}
\end{minipage}
}
\subfigure[]{
\begin{minipage}[c]{0.3\linewidth}
  \includegraphics[width=\linewidth]{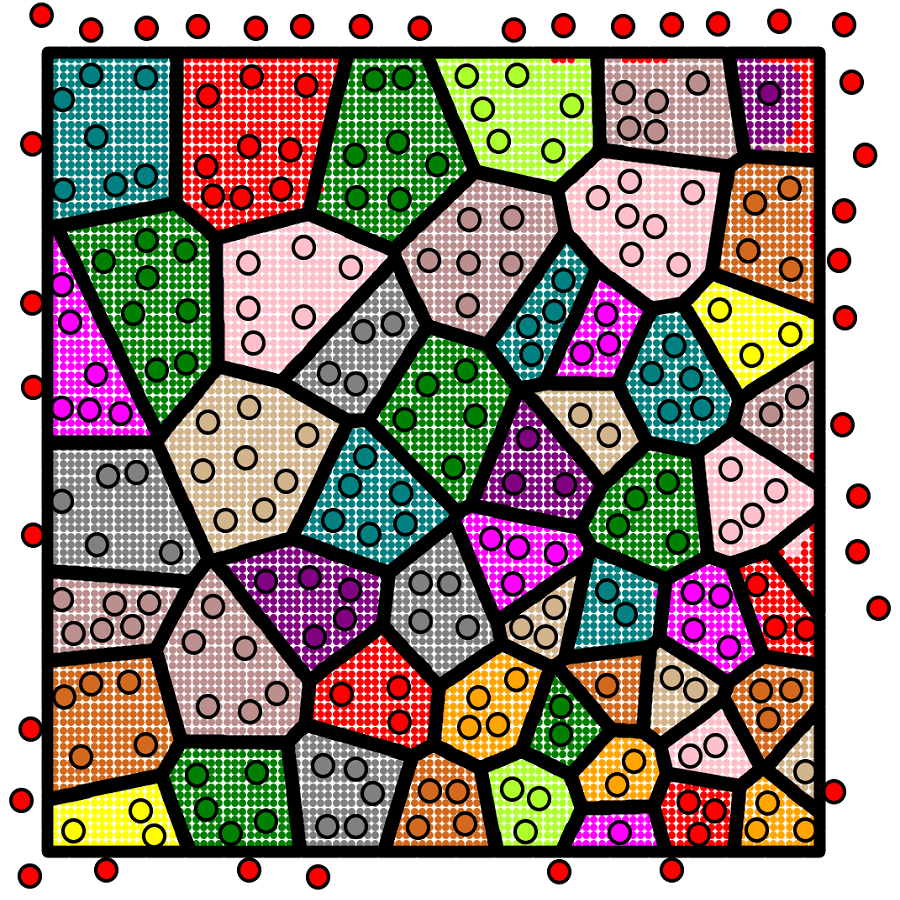}
  \includegraphics[width=\linewidth]{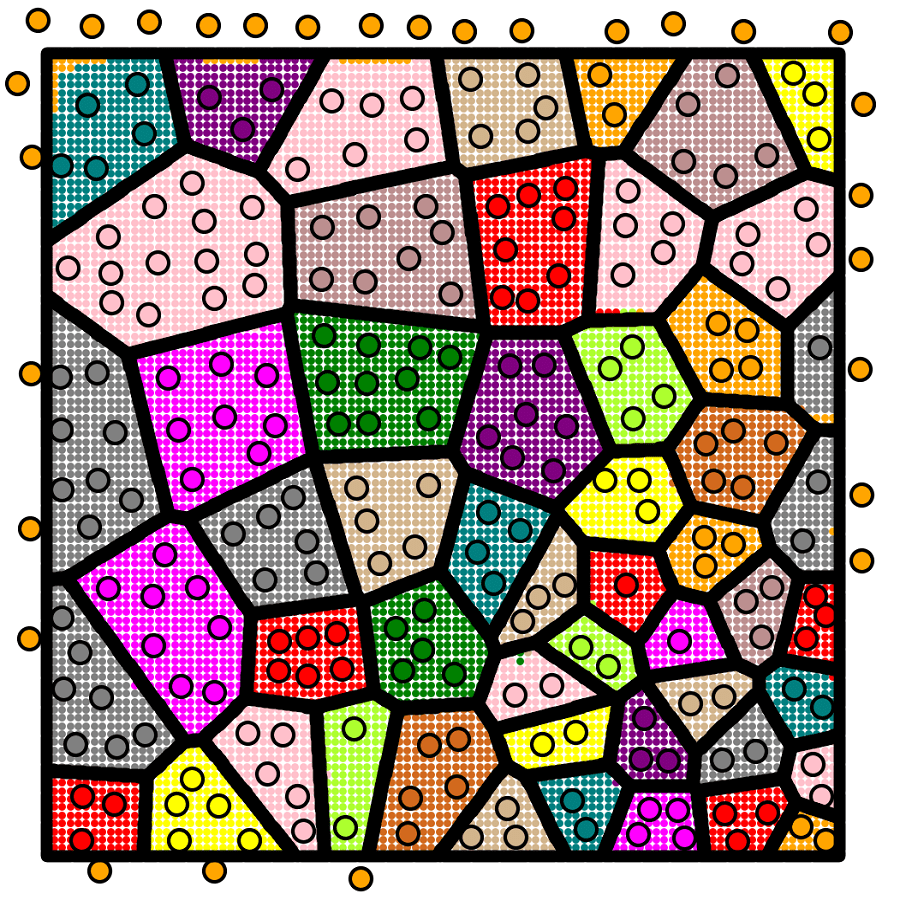}
  \includegraphics[width=\linewidth]{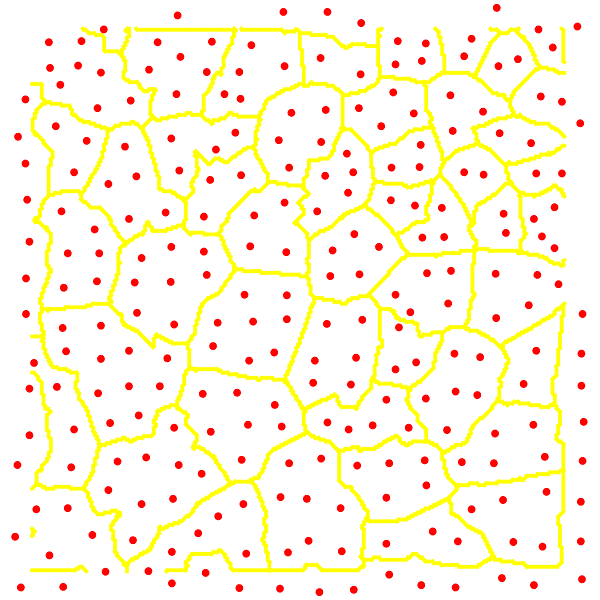}
\end{minipage}
}
  \caption{The first two rows show the fitting results of two artificial images in different structure distribution. In the middle col the coincidence degree of structure edges and Voronoi edges is presented; We color the Voronoi cells of different colors for different labels in the third column. The last row shows severial examples generated by GAN. }
\label{fig:result2}
\end{figure}


\section{Conclusion and Future works}
\label{sec:ConclusionAndFutureWork}
In this paper, we introduce a special Voronoi diagram based on convex set distance and first apply it in encoding contours and structures. By optimizing the parameters of convex set distance, the Voronoi edge is fitted to the potential contours. The CSVD model is embedded into convolutional layers and can be trained with other learning tasks. The encoding process generate a uniform parameterization of objects(structures) and the contours(connectivity edges) are determined by labels of Voronoi cells. The geometric interpretation of CSVD model avoid can be applied in helps to avoid over-fitting, which appears commonly in deep network.

We indicate that our CSVD encoder has the potential in many other aplications. In image recognition, CSVD encoder can track the object boundary and provide additional topology information. In 3D object classification and recognition, voxel-based convolutional operation is difficult due to the high computation and memory. Then 3D extension of CSVD encoder may take advantage of the sparsity of contour and parameterize objects volume uniformly. The CSVD model can also encode the distribution of structures with holes, such as tooth, bone in biology and porous structures in 3D printing and architecture.

{\small
\bibliographystyle{ieee}
\bibliography{egbib}
}

\end{document}